\newcommand{\cN}{\mathcal{N}}
\newcommand{\cH}{\mathcal{H}}
\newcommand{\cF}{\mathcal{F}}
\newcommand{\cK}{\mathcal{K}}
\newcommand{\R}{\mathbb{R}}
\newcommand{\Ev}{\mathbb{E}}
\newcommand{\cL}{\mathcal{L}}
\newcommand{\de}{\partial}
\newcommand{\bs}{\boldsymbol}
\newcommand{\mpr}{m_{\textnormal{pr}}}
\newcommand{\mpos}{m_{\textnormal{pos}}}
\newcommand{\Cpos}{C_{\textnormal{pos}}}
\newcommand{\Cpr}{C_{\textnormal{pr}}}
\newcommand{\KL}{\mathcal{D}_\textnormal{KL}}
\newcommand{\trace}{{\textnormal{trace}}}
\newcommand{\xtrue}{x_{\textnormal{true}}}
\newtheorem{theorem}{Theorem}
\newtheorem{corollary}{Corollary}
\newtheorem{proposition}{Proposition}
\title{A Stein variational Newton method}
\author{
Gianluca Detommaso\\
University of Bath \& The Alan Turing Institute\\
\texttt{gd391@bath.ac.uk} \\
\And
Tiangang Cui\\
Monash University\\
\texttt{Tiangang.Cui@monash.edu} \\
\And
Alessio Spantini\\
Massachusetts Institute of Technology\\
\texttt{spantini@mit.edu}
\And
Youssef Marzouk\\
Massachusetts Institute of Technology\\
\texttt{ymarz@mit.edu} \\
\And
Robert Scheichl\\
Heidelberg University\\
\texttt{r.scheichl@uni-heidelberg.de} 
}
\begin{document}

\maketitle

\begin{abstract}

  Stein variational gradient descent (SVGD) was recently proposed as a
  general purpose nonparametric variational inference algorithm [Liu
  \& Wang, NIPS 2016]: it minimizes the Kullback--Leibler divergence
  between the target distribution and its approximation by
  implementing a form of functional gradient descent on a reproducing
  kernel Hilbert space. In this paper, we accelerate and generalize
  the SVGD algorithm by including second-order information,
  thereby approximating a Newton-like iteration in function space. We
  also show how second-order information can lead to more effective
  choices of kernel.  We observe significant computational gains over
  the original SVGD algorithm in multiple test cases.

\end{abstract}

\section{Introduction}

Approximating an intractable probability distribution via a collection
of samples---in order to evaluate arbitrary expectations over the
distribution, or to otherwise characterize uncertainty that the
distribution encodes---is a core computational challenge in statistics
and machine learning.
Common features of the target distribution can make sampling a
daunting task. For instance, in a typical Bayesian inference problem,
the posterior distribution might be strongly non-Gaussian (perhaps
multimodal) and high dimensional, and evaluations of its density might
be computationally intensive.

There exist a wide range of algorithms for such
problems, ranging from parametric variational inference
\cite{blei2017variational} to Markov chain Monte Carlo (MCMC)
techniques \cite{gilks1995markov}.  Each algorithm offers a different
computational trade-off.  At one end of the spectrum, we find the
parametric mean-field approximation---a cheap but potentially
inaccurate variational approximation of the target density.  
At the other end, we find MCMC---a nonparametric sampling technique
yielding estimators that are consistent, but potentially slow to
converge.  In this paper, we focus on Stein variational gradient
descent (SVGD) \cite{liu2016svgd}, which lies somewhere in
the middle of the spectrum and can be described as a
particular {\it nonparametric} variational inference method
\cite{blei2017variational}, with close links to the density estimation
approach in \cite{AnderesCoram2012}. 

The SVGD algorithm seeks a deterministic coupling between a tractable
reference distribution of choice (e.g., a standard normal) and the
intractable target.  This coupling is induced by a transport map $T$
that can {\it transform} a collection of reference samples into
samples from the desired target distribution.  For a given pair of
distributions, there may exist infinitely many such maps
\cite{villani2008optimal}; several existing algorithms (e.g.,
\cite{tabak2013family,rezende2015variational,marzouk2016sampling}) aim
to approximate feasible transport maps of various forms.
The distinguishing feature of the SVGD algorithm
lies in its definition of a suitable map $T$.
Its central idea is to approximate $T$ as a growing composition of
simple maps, computed \emph{sequentially}:
\begin{equation}
T = T_1 \circ \cdots \circ T_k \circ \cdots,
\end{equation}
where each map $T_k$ is a perturbation of the identity map along the
steepest descent direction of a functional $J$ that describes the
Kullback--Leibler (KL) divergence between the pushforward of the
reference distribution through the composition
$T_1\circ\cdots\circ T_{k}$ and the target distribution.
The steepest descent direction is further {\it projected} onto a
reproducing kernel Hilbert space (RKHS) in order to give $T_k$ a
nonparametric closed form \cite{aronszajn1950theory}.
Even though the resulting map $T_k$ is available explicitly without
any need for numerical optimization, the SVGD algorithm implicitly
approximates a steepest descent iteration on a space of maps of given
regularity.

A primary goal of this paper is to explore the use of second-order
information (e.g., Hessians) within the SVGD algorithm.  Our idea is
to develop the analogue of a Newton iteration---rather than gradient
descent---for the purpose of sampling distributions more efficiently.
Specifically, we design an algorithm where each map $T_k$ is now
computed as the perturbation of the identity function along the
direction that minimizes a certain local quadratic approximation of $J$.
Accounting for second-order information can dramatically accelerate
convergence to the target distribution, at the price of additional
work per iteration.  The tradeoff between speed of convergence and
cost per iteration is resolved in favor of the Newton-like
algorithm---which we call a Stein variational Newton method (SVN)---in
several numerical examples.

The efficiency of the SVGD and SVN algorithms depends further on the
choice of reproducing kernel. A second contribution of this paper is
to design geometry-aware Gaussian kernels that also exploit
second-order information, yielding substantially faster convergence
towards the target distribution than SVGD or SVN with an isotropic
kernel.

In the context of {\it parametric} variational inference, second-order
information has been used  to accelerate the convergence of certain variational approximations, e.g., \cite{khan2017adaptivenewton,khal2017viRMSprop,marzouk2016sampling}.
In this paper, however, we focus on {\it nonparametric} variational
approximations, where the corresponding optimisation problem is
defined over an infinite-dimensional RKHS of transport maps.  More
closely related to our work is the Riemannian SVGD algorithm
\cite{liu2017riemannian}, which generalizes a gradient flow
interpretation of SVGD \cite{liu2017stein} to Riemannian manifolds,
and thus also exploits geometric information within the inference
task.

The rest of the paper is organized as follows.  Section \ref{sec:SVGD}
briefly reviews the SVGD algorithm, and Section \ref{sec:secordinfo}
introduces the new SVN method.  In Section \ref{sec:kernel} we
introduce geometry-aware kernels for the SVN method. Numerical
experiments are described in Section \ref{sec:testcases}.
Proofs of our main results and further numerical examples
addressing scaling to high dimensions are
given in the Appendix.
Code and all numerical examples are collected in our GitHub repository \cite{github}.

%
%
%
%
%
%
%
%
%
%

%

%
%
%
%
%
%
%
%
%
%
%

%
%
%
%
%
%
%
%
%
%
%
%
%

%
%
%

%

%

%
%

%

\section{Background}\label{sec:SVGD}

Suppose we wish to 
approximate
an intractable target %
distribution with density $\pi$ on
$\R^{d}$ via an empirical measure, i.e., a collection of samples. 
Given samples $\{x_i\}$ from a tractable reference density $p$ on $\R^{d}$, one can seek 
a transport map $T:\R^{d}\to\R^{d}$ such that the pushforward 
density of $p$ under $T$, denoted by $T_\ast p$, is a close approximation to the target $\pi$.\footnote{
If $T$ is invertible, then %
$T_*p(x)=p(T^{-1}(x))\,|\det( \nabla_x T^{-1}(x) )|$.
}
There exist infinitely many such maps \cite{villani2008optimal}.
The image of the reference samples under the map, $\{T(x_i)\}$, can then
serve as an empirical measure approximation of $\pi$ 
(e.g., in the weak sense \cite{liu2016svgd}).
\paragraph{Variational approximation.} 
Using the KL divergence to measure the discrepancy 
between the target $\pi$ and the pushforward $T_\ast p$, one can look for a transport map $T$ that minimises the 
functional
\begin{equation} \label{Jp}
T \mapsto \KL(T_*\,p\,||\,\pi)
\end{equation}
over a broad class of functions.
The Stein variational method 
breaks the minimization of \eqref{Jp} into 
several simple steps: it
builds
a sequence of transport maps $\{T_1, T_2, \ldots, T_l, \ldots\}$ to iteratively push an initial reference density $p_0$ towards $\pi$.
Given a scalar-valued RKHS $\cH$ with a
positive definite kernel $k(x, x^\prime)$, 
each transport map $T_l:\R^{d}\to\R^{d}$ is 
chosen to be    
a perturbation of the identity map $I(x)=x$ along the
vector-valued RKHS 
$\cH^d \simeq \cH \times \cdots \times \cH$, i.e.,
\begin{equation}\label{eq:map}
T_l(x) \coloneqq I(x) + Q(x)  \textrm{\quad for \quad }  Q\in\cH^d.%
\end{equation}
The transport maps 
are
computed iteratively.
At each iteration $l$, 
our best approximation of $\pi$
is given by the pushforward density 
$p_l = (T_l \circ \cdots \circ T_1)_\ast \,p_0$.
The SVGD algorithm then seeks
a transport map
$T_{l+1} = I + Q$
that further
decreases 
the KL divergence between $(T_{l+1})_\ast p_l$ and $\pi$,
\begin{equation}  \label{eq:objective_KL}
Q \mapsto J_{p_l}[Q] \coloneqq \KL((I+Q)_*\,p_l\,||\,\pi),
\end{equation}
for an appropriate choice of $Q\in \cH^d$.
In other words, the SVGD algorithm seeks a map $Q \in \cH^d$ such that
\begin{equation} \label{eq:monot}
J_{p_l}[Q] < J_{p_l}[{\bf 0}],
\end{equation}
where ${\bf 0}(x)=0$ denotes the zero map.
By construction, the sequence of pushforward
densities $\{p_0, p_1, p_2, \ldots, p_l, \ldots \}$ becomes
increasingly closer (in KL divergence) to the target $\pi$.
Recent results on the convergence 
of the SVGD algorithm are presented in
\cite{liu2017stein}.

\paragraph{Functional gradient descent.} 
The first variation of $J_{p_l}$ at $S \in \cH^d$ along $V \in \cH^d$ can be defined as
\begin{equation}
D J_{p_l}[S] (V) \coloneqq \lim_{\tau \rightarrow 0} \frac{1}{\tau}\big( J_{p_l}[S + \tau V] - J_{p_l}[S] \big).
\end{equation}
Assuming that the objective function $J_{p_l}: \cH^d \to \R$ is Fr\'{e}chet differentiable, the \emph{functional gradient} of $J_{p_l}$ at $S \in \cH^d$ is the
element $\nabla J_{p_l}[S]$ of $\cH^d$ such that  %
\begin{equation} \label{eq:grad}
D J_{p_l}[S] (V) = \langle \nabla J_{p_l}[S], V \rangle_{\cH^d} \quad \forall \, V \in \cH^d,
\end{equation}
where $\langle \cdot , \cdot  \rangle_{\cH^d}$
denotes an inner product on $\cH^d$.

In order to satisfy \eqref{eq:monot},
the SVGD algorithm %
defines $T_{l+1}$ as a perturbation of the identity map
along the steepest descent direction of the
functional $J_{p_l}$ evaluated at the zero map, i.e.,
\begin{equation} \label{eq:step_desc}
T_{l+1} = I -\varepsilon \nabla J_{p_l}[{\bf 0}],
\end{equation}
for a small enough $\varepsilon > 0$.
It was shown in \cite{liu2016svgd}  that the functional gradient at
${\bf 0}$ has
a closed form expression given by 
\begin{equation}\label{eq:grad_l}
- \nabla J_{p_l}[{\bf 0}](z) =  \Ev_{x\sim p_l}[k(x,z) \nabla_x\log\pi(x) + \nabla_xk(x, z)].
\end{equation}

\paragraph{Empirical approximation.} 
There are several ways to approximate the expectation in \eqref{eq:grad_l}.
For instance, a set of particles $\{x_i^0\}_{i = 1}^{n}$ can be generated from the initial reference density $p_0$ and pushed forward by the transport maps
$\{T_1, T_2, \ldots \}$.
The pushforward density  $p_l$ can then be approximated by the
empirical measure given by the particles $\{x_i^l\}_{i = 1}^{n}$, where $x_i^l = T_l(x_i^{l-1})$ for $i = 1, \ldots, n$, so that
\begin{equation}\label{eq:MCgradJ}
- \nabla J_{p_l}[{\bf 0}](z) \approx G(z) \coloneqq \frac{1}{n}\textstyle\sum_{j=1}^n\big[k(x_j^l,z) \nabla_{x_j^l}\log\pi(x_j^l) + \nabla_{x_j^l}k(x_j^l,z)\big]\,.
\end{equation}
The first term in 
\eqref{eq:MCgradJ} corresponds to a weighted average steepest descent direction of the log-target density $\pi$ with respect to $p_l$. This
term is responsible for transporting  particles towards high-probability regions of 
$\pi$. 
In contrast, the second term can be viewed as a ``repulsion force'' that spreads the particles along the support of $\pi$,
preventing
them from collapsing around the mode of $\pi$. 
The SVGD algorithm is summarised in Algorithm 1. 

\begin{algorithm}[H]
\caption{One iteration of the Stein variational gradient algorithm}\label{alg_stein}
\SetKwInOut{Input}{Input}
\SetKwInOut{Output}{Output}
\Input{Particles $\{x_i^l\}_{i=1}^n$ at previous iteration $l$; step size $\varepsilon_{l+1}$}
\Output{Particles $\{x_i^{l+1}\}_{i=1}^n$ at new iteration $l+1$}
\begin{algorithmic}[1]
\FOR{$i=1,2,\ldots, n$}
\STATE Set $x_i^{l+1} \gets x_i^l + \varepsilon_{l+1}\, G(x_i^{l})$, where $G$ is defined in \eqref{eq:MCgradJ}.
\ENDFOR
\end{algorithmic}
\end{algorithm}

\section{Stein variational Newton method}\label{sec:secordinfo}

Here we propose a new method that incorporates second-order information to accelerate the convergence of the SVGD algorithm. %
We replace the steepest descent direction in
\eqref{eq:step_desc} with an approximation of the 
Newton direction.

\paragraph{Functional Newton direction.} 
Given a differentiable objective function $J_{p_l}$,
we can define the second variation of $J_{p_l}$ at ${\bf 0}$ along the pair of directions $V, W \in \cH^d$  as %
\[
D^2 J_{p_l}[{\bf 0}] (V, W) \coloneqq 
\lim_{\tau \rightarrow 0} \frac{1}{\tau}\big( DJ_{p_l}[\tau W](V) - DJ_{p_l}[{\bf 0}](V) \big).
\]
At each iteration, the  Newton method seeks
to minimize a local quadratic approximation of $J_{p_l}$.
The minimizer $W \in \cH^d$ of this quadratic form defines the Newton direction and is 
characterized by 
the first order 
stationarity 
conditions
\begin{equation} \label{eq:bil_form}
D^2 J_{p_l}[{\bf 0}] (V, W)  = - D J_{p_l}[{\bf 0}] (V), \quad \forall \, V \in \cH^d.
\end{equation}
We can then look for a transport map $T_{l+1}$ that is 
a local perturbation of the identity map along
the Newton direction, i.e.,
\begin{equation} \label{eq:newton_map}
T_{l+1} = I + \varepsilon W,
\end{equation}
for some $\varepsilon > 0$ that
satisfies \eqref{eq:monot}.
The function $W$ is guaranteed to be a descent direction
if the bilinear form $D^2 J_{p_l}[{\bf 0}]$ in 
\eqref{eq:bil_form}
is positive definite.
The following theorem gives an explicit
form for $D^2 J_{p_l}[{\bf 0}]$ and is
proven in Appendix.

\begin{theorem}\label{thm:hessian}
The variational characterization
of the Newton direction
$W=(w_1,\dots,w_d)^\top\in\cH^d$ in 
\eqref{eq:bil_form} is equivalent to
\begin{equation} \label{eq:variationalForm}
\sum_{i=1}^d\left\langle
\sum_{j=1}^d
\left\langle  h_{ij}(y,z), w_j(z)\right\rangle_{\cH} + \partial_i J_{p_l}[\bs 0](y), v_i(y)
\right\rangle_{\cH} = 0, 
\end{equation}
for all $V=(v_1,\dots,v_d)^\top\in\cH^d$, where
\begin{equation} \label{eq:sym_form}
h_{ij}(y,z) = \mathbb{E}_{x\sim p_l}\left[ \,-\partial_{ij}^2\log \pi(x) k(x,y) k(x,z) + \partial_i k(x, y)\partial_j k(x, z)\,\right].
\end{equation}
\end{theorem}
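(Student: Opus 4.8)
The plan is to compute the Hessian bilinear form $D^2 J_{p_l}[\bs 0](\cdot,\cdot)$ in closed form directly from the definitions and then read off \eqref{eq:variationalForm}; the term $\partial_i J_{p_l}[\bs 0](y)$ there is just the $i$-th component of the functional gradient, whose closed form \eqref{eq:grad_l} is already known, so it suffices to identify $D^2 J_{p_l}[\bs 0](V,W)$ with the double sum $\sum_i\big\langle\sum_j\langle h_{ij},w_j\rangle_\cH,\,v_i\big\rangle_\cH$. The first task is to make $J_{p_l}$ explicit. For $Q$ in a small enough neighbourhood of $\bs 0$ in $\cH^d$ — small enough that $I+Q$ is a diffeomorphism, which holds when $k$ is $C^1$ with bounded derivatives — the change-of-variables formula for the pushforward gives
\[
J_{p_l}[Q] = \Ev_{x\sim p_l}[\log p_l(x)] - \Ev_{x\sim p_l}\big[\log\lvert\det(\bI+\nabla Q(x))\rvert + \log\pi(x+Q(x))\big],
\]
in which the first term does not depend on $Q$. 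Differentiating once along $V$, using Jacobi's formula $\frac{d}{dt}\log\det M(t)=\trace(M(t)^{-1}\dot M(t))$ for the log-determinant and the chain rule for the log-target, gives
\[
DJ_{p_l}[S](V) = -\,\Ev_{x\sim p_l}\big[\trace\big((\bI+\nabla S(x))^{-1}\nabla V(x)\big) + \nabla\log\pi(x+S(x))\cdot V(x)\big].
\]
Putting $S=\tau W$, differentiating at $\tau=0$, and moving the $\tau$-derivative inside the expectation (dominated convergence, using smoothness and integrability of $\pi$ and $k$) expresses $D^2 J_{p_l}[\bs 0](V,W)$ as a contribution from the log-Jacobian plus the contribution $-\Ev_{x\sim p_l}[\,V(x)^\top\nabla^2\log\pi(x)\,W(x)\,]$ from the log-target.

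It remains to recognise this as the left-hand side of \eqref{eq:variationalForm}, which is where the RKHS structure enters. I would use the reproducing identity and its differentiated form: for $f\in\cH$, $f(x)=\langle f,k(x,\cdot)\rangle_\cH$ and $\partial_{x_i}f(x)=\langle f,\partial_i k(x,\cdot)\rangle_\cH$; hence each factor $V_i(x)$, $W_j(x)$ and each of their partial derivatives equals an $\cH$-inner product against $k(x,\cdot)$ or $\partial_i k(x,\cdot)$. Rewriting every pointwise product inside $\Ev_{x\sim p_l}$ as a nested pair of such inner products and then swapping the expectation with the (bounded) bilinear pairing — again a Fubini/boundedness step — absorbs the $x$-integral into the kernels $h_{ij}(y,z)$ of \eqref{eq:sym_form}, the $-\partial^2_{ij}\log\pi$ piece coming from the Hessian-of-log-target term and the $\partial_i k\,\partial_j k$ piece from the log-Jacobian term. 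Collecting the double sum and using $\partial^2_{ij}\log\pi=\partial^2_{ji}\log\pi$ yields \eqref{eq:variationalForm}; along the way one also sees that the $\partial_i k\,\partial_j k$ piece contributes a positive-semidefinite form and the Hessian piece does too whenever $\pi$ is log-concave, consistent with the descent-direction discussion above.

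I expect the main difficulties to be analytic, not algebraic. One has to verify that $I+Q$ is a diffeomorphism and that $\log\lvert\det(\bI+\nabla Q)\rvert$ and $\log\pi(x+Q(x))$ are $p_l$-integrable for $Q$ near $\bs 0$, so that $J_{p_l}$ is well defined and twice Fr\'echet differentiable there; and one has to justify the two interchanges — the $\tau$-derivative versus the expectation in the Hessian computation, and the expectation versus the RKHS inner product in the rewriting — by producing dominating functions built from moment bounds on $p_l$ together with boundedness of $\nabla^2\log\pi$, $k$ and its first derivatives. Once these regularity conditions are in place, the symbolic manipulations (Jacobi's formula, the chain rule, the reproducing property) are routine bookkeeping.
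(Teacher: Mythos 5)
Your proposal follows essentially the same route as the paper's own proof: first derive the first variation $DJ_{p_l}[S](V)$ via the pullback/change-of-variables formula and Jacobi's identity for the log-determinant (the paper's Proposition in the Appendix), then obtain $D^2J_{p_l}[\bs 0](V,W)=-\Ev_{x\sim p_l}[W^\top\nabla_x^2\log\pi\,V-\trace(\nabla_x W\,\nabla_x V)]$ by differentiating along $\tau W$, and finally use the reproducing property of $k$ and its derivatives to absorb the expectation into the kernels $h_{ij}$. The only difference is that you flag the analytic justifications (diffeomorphism, dominated convergence, exchange of expectation and inner product) explicitly, which the paper leaves implicit.
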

We 
propose
a Galerkin
approximation of \eqref{eq:variationalForm}.
Let $(x_k)_{k=1}^n$ be an ensemble of particles
distributed according to $p_l(\,\cdot\,)$, and define the finite dimensional linear space 
$\cH^d_n = {\rm span}\{ k(x_1, \cdot), \ldots, k(x_n, \cdot) \}$.
We look for 
an approximate
solution 
$W = (w_1,\ldots,w_d)^\top$ in $\cH^d_n$---i.e.,
\begin{equation} \label{eq:wi}
  w_j(z) = \sum_{k=1}^n \alpha^k_j \,k(x_k, z)
\end{equation}
for some unknown coefficients $(\alpha_j^k)$---such
that the residual of
\eqref{eq:variationalForm} is orthogonal to 
$\cH^d_n$.
The following corollary gives an 
explicit characterization of the Galerkin solution 
and is proven in the Appendix.
\begin{corollary}\label{coro:galerkin}
The 
coefficients $(\alpha_j^k)$ 
are given by the solution of the 
linear
system
\begin{equation}  \label{eq:semi-comp}
 \sum_{k=1}^n \,  
 H^{s,k}\,\alpha^k   = \nabla J^s, \textrm{\quad for\;all\quad} s=1,\ldots,n,
\end{equation}
where $\alpha^k := \big(\alpha^k_1, \ldots, \alpha^k_d\big)^\top$ is a vector of unknown coefficients, 
$(H^{s, k})_{ij} := h_{ij}(x_s,x_k)$ is 
the evaluation of the symmetric form 
\eqref{eq:sym_form} at pairs of particles,
and where
$\nabla J^s := - \nabla J_{p_l}[\bs 0](x_s)$
represents the evaluation of the first variation at the
$s$-th particle.
\end{corollary}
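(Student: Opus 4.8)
The plan is to derive \eqref{eq:semi-comp} as a direct algebraic consequence of the variational identity \eqref{eq:variationalForm} of Theorem \ref{thm:hessian}: substitute the Galerkin ansatz \eqref{eq:wi}, test the residual against a basis of the finite-dimensional subspace $\cH^d_n$, and collapse every inner product using the reproducing property of $\cH$.

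First I would note that, since $\cH^d_n$ is spanned by the $nd$ functions $e_i\,k(x_s,\cdot)$ with $i=1,\dots,d$ and $s=1,\dots,n$ (where $e_i$ is the $i$-th coordinate vector in $\R^d$), requiring the residual of \eqref{eq:variationalForm} to be orthogonal to $\cH^d_n$ is equivalent to requiring \eqref{eq:variationalForm} to hold for each such $V$. For $V=e_i\,k(x_s,\cdot)$ the outer sum over the coordinate index collapses to a single term, leaving, for every pair $(s,i)$,
\[
\Big\langle\, \sum_{j=1}^d \big\langle h_{ij}(y,z),\, w_j(z)\big\rangle_{\cH} \;+\; \partial_i J_{p_l}[\bs 0](y),\;\; k(x_s,y) \,\Big\rangle_{\cH} \;=\; 0 .
\]

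Next I would insert $w_j(z)=\sum_{k=1}^n \alpha_j^k\,k(x_k,z)$ and apply the reproducing identity $\langle f,k(x,\cdot)\rangle_{\cH}=f(x)$ twice. Using it in the $z$ variable gives $\langle h_{ij}(y,z),w_j(z)\rangle_{\cH}=\sum_{k=1}^n \alpha_j^k\, h_{ij}(y,x_k)$; using it again in the $y$ variable against $k(x_s,\cdot)$, together with $\langle \partial_i J_{p_l}[\bs 0],k(x_s,\cdot)\rangle_{\cH}=\partial_i J_{p_l}[\bs 0](x_s)$, turns the displayed equation into
\[
\sum_{j=1}^d \sum_{k=1}^n h_{ij}(x_s,x_k)\,\alpha_j^k \;=\; -\,\partial_i J_{p_l}[\bs 0](x_s) .
\]
Stacking these $d$ scalar equations over $i$ for each fixed $s$, and recalling the definitions $(H^{s,k})_{ij}=h_{ij}(x_s,x_k)$, $\alpha^k=(\alpha_1^k,\dots,\alpha_d^k)^\top$, and $\nabla J^s=-\nabla J_{p_l}[\bs 0](x_s)$, reproduces exactly the block linear system \eqref{eq:semi-comp}; the number of unknowns $\alpha_j^k$ equals the number of equations, namely $nd$.

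I expect the only genuinely delicate point to be the justification that the nested inner products collapse as claimed, i.e.\ that $h_{ij}(\cdot,z)$ and $h_{ij}(y,\cdot)$ actually belong to $\cH$ so that the reproducing property applies. This needs the kernel to be regular enough that the partial derivatives $\partial_i k(x,\cdot)$ lie in $\cH$ (which holds, for instance, for Gaussian kernels by standard RKHS smoothness results), and that the $p_l$-expectation defining $h_{ij}$ in \eqref{eq:sym_form} can be exchanged with the $\cH$-inner product, a Bochner-integral argument requiring an integrability bound uniform over the particle locations $x_1,\dots,x_n$. The membership $\partial_i J_{p_l}[\bs 0]\in\cH$ is already part of the hypotheses of Theorem \ref{thm:hessian}. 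Modulo these regularity checks, the corollary follows purely by linear algebra and the reproducing property.
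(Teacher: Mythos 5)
Your proposal is correct and follows essentially the same route as the paper's proof: substitute the Galerkin ansatz, test against the kernel sections centred at the particles, and collapse the inner products via the reproducing property to obtain the block system. Your explicit choice of test functions $e_i\,k(x_s,\cdot)$ is in fact a cleaner rendering of the paper's projection onto $V^s$, and your closing remark about needing $h_{ij}(y,\cdot)\in\cH$ (kernel regularity and exchanging the $p_l$-expectation with the inner product) flags a regularity point the paper leaves implicit.
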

In practice, we can only evaluate a 
Monte Carlo
approximation of
$H^{s, k}$  and $\nabla J^s$ in 
\eqref{eq:semi-comp} using the
ensemble
$(x_k)_{k=1}^n$.
\paragraph{Inexact Newton.} 
The solution of  \eqref{eq:semi-comp} by means
of direct solvers might be
impractical for problems with a large number of particles $n$ or high
parameter dimension $d$, since
it is a linear system  with $n d$ unknowns.
Moreover, the solution of \eqref{eq:semi-comp} might not lead to a descent direction (e.g., when $\pi$ is not log-concave).
We address these issues by deploying two well-established techniques in nonlinear optimisation \cite{wright1999numerical}.
In the first approach, we solve
\eqref{eq:semi-comp} using
the inexact %
Newton--conjugate gradient (NCG) method \cite[Chapters 5 and 7]{wright1999numerical},
wherein
 a descent direction can be guaranteed by appropriately terminating the conjugate gradient 
 iteration.
The NCG method only needs to evaluate the matrix-vector product with
each $H^{s, k}$ and does not construct the matrix explicitly, and thus
can be scaled to high dimensions.
In the second approach, we simplify the problem further by taking a block-diagonal approximation of the second variation,
breaking 
\eqref{eq:semi-comp} into $n$ decoupled $d\times d$ linear systems 
\begin{equation} 
H^{s,s}\alpha^s = \nabla J^s, \qquad s=1,\dots n\,. 
\label{eq:decoupled}
\end{equation}
Here, we can 
either employ a Gauss-Newton approximation of the Hessian $\nabla^2
\log\pi$ in $H^{s,s}$ or again use inexact Newton--CG, to guarantee that the 
 approximation of the Newton direction is a descent direction.

Both the block-diagonal approximation and inexact NCG are more efficient than solving for the full Newton direction \eqref{eq:semi-comp}. In addition, the block-diagonal form \eqref{eq:decoupled} can be solved in parallel for each of the blocks, and hence it may best suit high-dimensional applications and/or large numbers of particles.
In the Appendix, we provide a comparison of these approaches on various examples. Both approaches provide similar progress per SVN iteration compared to the full Newton direction. 

Leveraging second-order information provides a
natural scaling for the step size, i.e., $\varepsilon  = O(1)$.
Here, the choice $\varepsilon  = 1$ performs reasonably well in our numerical experiments (Section \ref{sec:testcases} and the Appendix). In future work, 
we will refine our strategy by considering
either a line search or a trust region step.
The resulting Stein variational Newton method is summarised in Algorithm 2.

\begin{algorithm}[H]
\caption{One iteration of the Stein variational Newton algorithm}\label{alg_stein_newton}
\SetKwInOut{Input}{Input}
\SetKwInOut{Output}{Output}
\Input{Particles $\{x_i^{l}\}_{i=1}^n$ at stage $l$; step size $\varepsilon$}
\Output{Particles $\{x_i^{l+1}\}_{i=1}^n$ at stage $l+1$}
\begin{algorithmic}[1]
\FOR{$i=1,2,\ldots,n$}
\STATE Solve the linear system \eqref{eq:semi-comp} for $\alpha^{1}, \ldots, \alpha^{n}$ 
\STATE Set $x_i^{l+1} \gets x_i^{l} + \varepsilon W(x_i^{l})$ given $\alpha^{1}, \ldots, \alpha^{n}$ 
\ENDFOR
\end{algorithmic}
\end{algorithm}

\section{Scaled Hessian kernel}\label{sec:kernel}
In the Stein variational method, the kernel weighs the contribution of each particle to a locally \textit{averaged} steepest descent direction of the target distribution, and it also spreads the particles along the support of the target distribution. 
Thus it is essential to choose a kernel that can capture the underlying geometry of the target distribution, so the particles can traverse the support of the target distribution efficiently.
To this end, we can use the curvature information characterised by the Hessian of the logarithm of the target density to design anisotropic kernels.

Consider a positive definite matrix $A(x)$ that approximates the local Hessian of the negative logarithm of the target density, i.e., $A(x) \approx - \nabla_x^2\log\pi(x)$. 
We introduce the metric
\begin{equation}\label{kernel_metric}
	M_\pi \coloneqq \Ev_{x\sim \pi}[A(x)]\,, 
\end{equation}
to characterise the average curvature of the target density, stretching and compressing the parameter space in different directions.
There are a number of computationally efficient ways to evaluate such an $A(x)$---for example, the generalised eigenvalue approach in \cite{stonewton2012} and the Fisher information-based approach in \cite{girolami2011riemann}. 
The expectation in \eqref{kernel_metric} is taken against the target density $\pi$, and thus cannot be directly computed.  
Utilising the ensemble $\{x_i^l\}_{i = 1}^n$ in each iteration, we introduce an alternative metric
\begin{equation}\label{eq:metric_MC}
	M_{p_l} \coloneqq \frac{1}{n} \textstyle \sum_{i = 1}^{n} A(x_i^l),
\end{equation}
to approximate $M_\pi$.
Similar approximations have also been introduced in the context of dimension reduction for statistical inverse problems; see \cite{cui2014likelihood}. 
Note that the computation of the metric \eqref{eq:metric_MC} does not incur extra computational cost, as we already calculated (approximations to) $\nabla_x^2\log\pi(x)$ at each particle in the Newton update.

Given a kernel of the generic form $k(x, x^\prime) = f ( \|x-x^\prime\|^2 )$, we can then use the metric $M_{p_l}$ to define an anisotropic kernel
\[
	k_l(x, x^\prime) = f \bigg( \frac{1}{g(d)}  \|x-x^\prime\|^2_{M_{p_l}} \bigg), 
\]
where the norm $\| \cdot \|_{M_{p_l}}$ is defined as $\|x\|^2_{M_{p_l}} = x^\top M_{p_l} x$ and $g(d)$ is a positive and real-valued function of the dimension $d$.
For example, with $g(d) = d$, the Gaussian kernel used in the SVGD of \cite{liu2016svgd} can be modified as
\begin{equation}\label{kernel_hessian}
	k_l(x,x^\prime) \coloneqq \exp\bigg(-\frac{1}{2d} \|x-x^\prime\|^2_{M_{p_l}} \bigg)\, .
\end{equation}

The metric $M_{p_l}$ induces a deformed geometry in the parameter space: distance is greater along directions where the (average) curvature is large. This geometry directly affects how particles in SVGD or SVN flow---by shaping the locally-averaged gradients and the ``repulsion force'' among the particles---and tends to spread them more effectively over the high-probability regions of $\pi$. 

The dimension-dependent scaling factor $g(d)$ plays an important role in high dimensional problems. 
Consider a sequence of target densities that converges to a limit as the dimension of the parameter space increases. For example, in the context of Bayesian inference on function spaces, e.g., \cite{stuart2010}, the posterior density is often defined on a discretisation of a function space, whose dimensionality increases as the discretisation is refined. 
In this case, the $g(d)$-weighed norm $\|\cdot\|^2/d$ is the square of the discretised $L^2$ norm under certain technical conditions (e.g., the examples in Section \ref{subsec:cond_diff} and the Appendix) and converges to the functional $L^2$ norm as $d \to \infty$. 
With an appropriate scaling $g(d)$, the kernel may thus exhibit robust behaviour with respect to discretisation if the target distribution has appropriate infinite-dimensional limits. 
For high-dimensional target distributions that do not have a well-defined limit with increasing dimension, an appropriately chosen scaling function $g(d)$ can still improve the ability of the kernel to discriminate inter-particle distances. 
Further numerical investigation of this effect is presented in the Appendix.

\section{Test cases}\label{sec:testcases}
We evaluate our new SVN method with the scaled Hessian kernel on a set of test cases drawn from various Bayesian inference tasks. 
For these test cases, the target density $\pi$ is the (unnormalised) posterior density. 
We assume the prior distributions are Gaussian, that is, $\pi_0(x) = \cN(\mpr,\Cpr)$, where $\mpr\in\R^d$ and $\Cpr\in\R^{d\times d}$ are the prior mean and prior covariance, respectively. 
Also, we assume there exists a forward operator $\cF:\R^d\to\R^m$ mapping from the parameter space to the data space. The relationship between the observed data and unknown parameters can be expressed as $y=\cF(x) + \xi$, where $\xi\sim\cN(0, \sigma^2\,I)$ is the measurement error and $I$ is the identity matrix. 
This relationship defines the likelihood function $\cL(y|x) = \cN(\cF(x),\sigma^2\,I)$ and the (unnormalised) posterior density $\pi(x) \propto \pi_0(x)\cL(y|x)$.
  
We will compare the performance of SVN and SVGD, both with the scaled Hessian kernel \eqref{kernel_hessian} and the heuristically-scaled isotropic kernel used in \cite{liu2016svgd}. 
We refer to these algorithms as SVN-H, SVN-I, SVGD-H, and SVGD-I, where `H' or `I' designate the Hessian or isotropic kernel, respectively. Recall that the heuristic used in the `-I' algorithms involves a scaling factor based on the number of particles $n$ and the median pairwise distance between particles \cite{liu2016svgd}.
Here we present two test cases, one multi-modal and the other high-dimensional. In the Appendix, we report on additional tests. First, we evaluate the performance of SVN-H with different Hessian approximations: the exact Hessian (full Newton), the block diagonal Hessian, and a Newton--CG version of the algorithm with exact Hessian. Second, we provide a performance comparison between SVGD and SVN on a high-dimensional Bayesian neural network. Finally, we provide further numerical investigations of the dimension-scalability of our scaled kernel.

\subsection{Two-dimensional double banana}\label{testcase:db}
The first test case is a two-dimensional bimodal and ``banana'' shaped posterior density. The prior is a standard multivariate Gaussian, i.e., $\mpr= 0$ and $\Cpr = I$, and the observational error has standard deviation $\sigma = 0.3$. The forward operator is taken to be a scalar logarithmic Rosenbrock function, i.e.,
\[ \cF(x) = \log\left((1-x_1)^2 + 100(x_2 - x_1^2)^2\right)\,, \]
where $x=(x_1,x_2)$. We take a single observation $y=\cF(\xtrue)+\xi$, with $\xtrue$ being a random variable drawn from the prior and $\xi \sim \cN(0,\sigma^2\,I)$.
  
  \begin{figure*}[h!]
    \centering
		\includegraphics[trim= 4.6cm 9.5cm 4cm 9.5cm, width=0.8\textwidth]{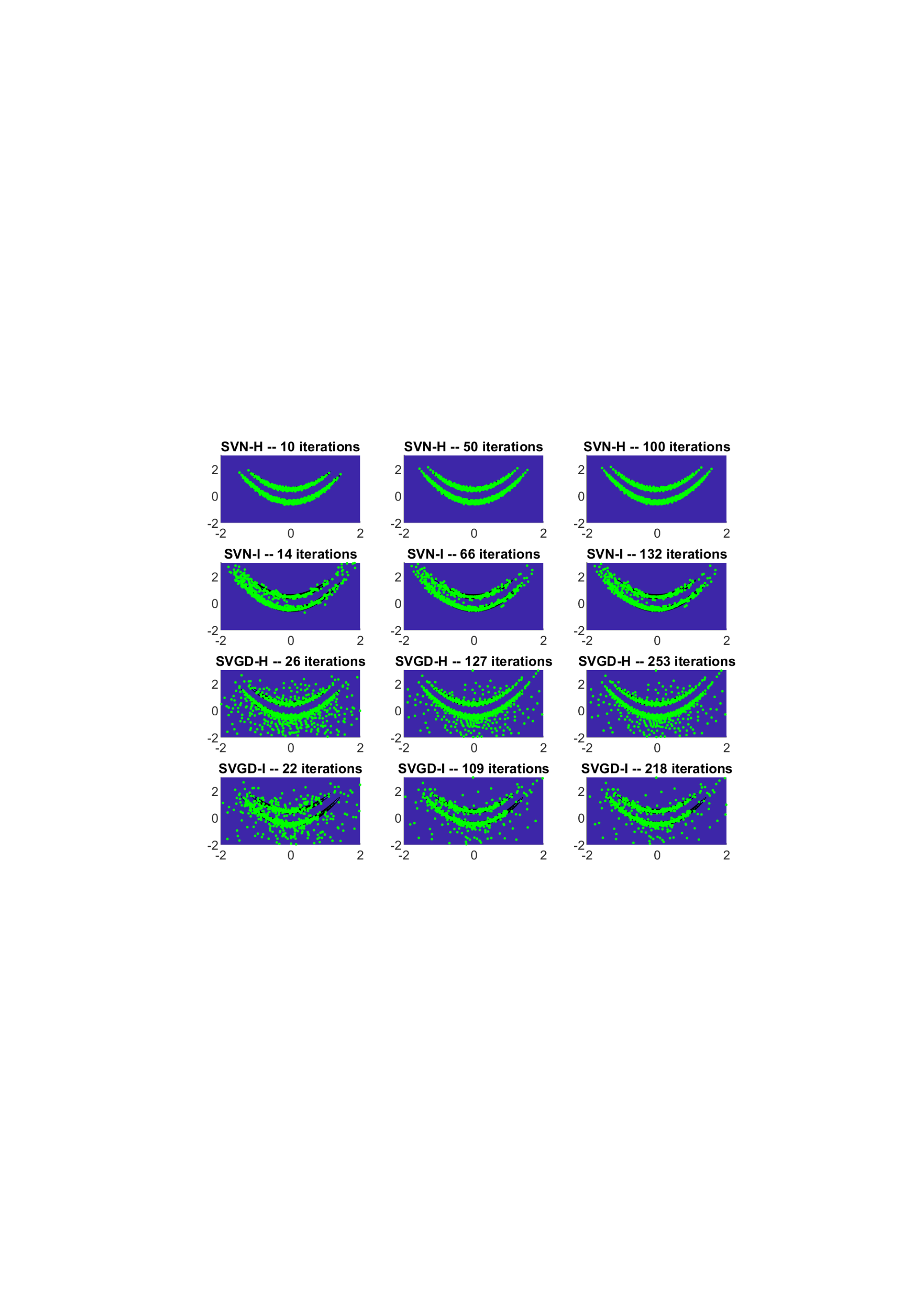}
		\caption{Particle configurations superimposed on contour plots of the double-banana density.}\label{fig:doublebanana}
	\end{figure*}

Figure \ref{fig:doublebanana} summarises the outputs of four algorithms at selected iteration numbers, each with $n=1000$ particles initially sampled from the prior $\pi_0$. 
The rows of Figure \ref{fig:doublebanana} correspond to the choice of algorithms and the columns of Figure \ref{fig:doublebanana} correspond to the outputs at different iteration numbers. 
We run 10, 50, and 100 iterations of SVN-H. To make a fair comparison, we rescale the number of iterations for each of the other algorithms so that the total cost (CPU time) is approximately the same. It is interesting to note that the Hessian kernel takes considerably less computational time than the Isotropic kernel. This is because, whereas the Hessian kernel is automatically scaled, the Isotropic kernel calculates the distance between the particles at each iterations to heuristically rescale the kernel.
  
The first row of Figure \ref{fig:doublebanana} displays the performance of SVN-H, where second-order information is exploited both in the optimisation and in the kernel. After only 10 iterations, the algorithm has already converged, and the configuration of particles does not visibly change afterwards. Here, all the particles quickly reach the high probability regions of the posterior distribution, due to the Newton acceleration in the optimisation. Additionally, the scaled Hessian kernel seems to spread the particles into a structured and precise configuration.
  
The second row shows the performance of SVN-I, where the second-order information is used exclusively in the optimisation. We can see the particles quickly moving towards the high-probability regions, but the configuration is much less structured. After 47 iterations, the algorithm has essentially converged, but the configuration of the particles is noticeably rougher than that of SVN-H.
  
SVGD-H in the third row exploits second-order information exclusively in the kernel. Compared to SVN-I, the particles spread more quickly over the support of the posterior, but not all the particles reach the high probability regions, due to slower convergence of the optimisation. The fourth row shows the original algorithm, SVGD-I. The algorithm lacks both of the benefits of second-order information: with slower convergence and a more haphazard particle distribution, it appears less efficient for reconstructing the posterior distribution. 
  
\subsection{100-dimensional conditioned diffusion}\label{subsec:cond_diff}
The second test case is a high-dimensional model arising from a Langevin SDE, with state $u:[0,T]\to\R$ and dynamics given by
\begin{equation}\label{eq:langevin}
	du_t = \frac{\beta u\, (1-u^2)}{(1+u^2)}\,dt + dx_t,\quad u_0=0\, .
\end{equation}
Here $x=(x_t)_{t\ge 0}$ is a Brownian motion, so that $x\sim \pi_0 = \cN(0,C)$, where $C(t,t') = \min(t,t')$. 
This system represents the motion of a particle with negligible mass trapped in an energy potential, with thermal fluctuations represented by the Brownian forcing; it is often used as a test case for MCMC algorithms in high dimensions \cite{cui2016dimension}. 
Here we use $\beta=10$ and $T=1$. Our goal is to infer the driving process $x$ and hence its pushforward to the state $u$.

\begin{figure*}[h!]
  \centering
	\includegraphics[trim= 1cm 5.8cm 0.5cm 4.5cm, width=0.8\textwidth]{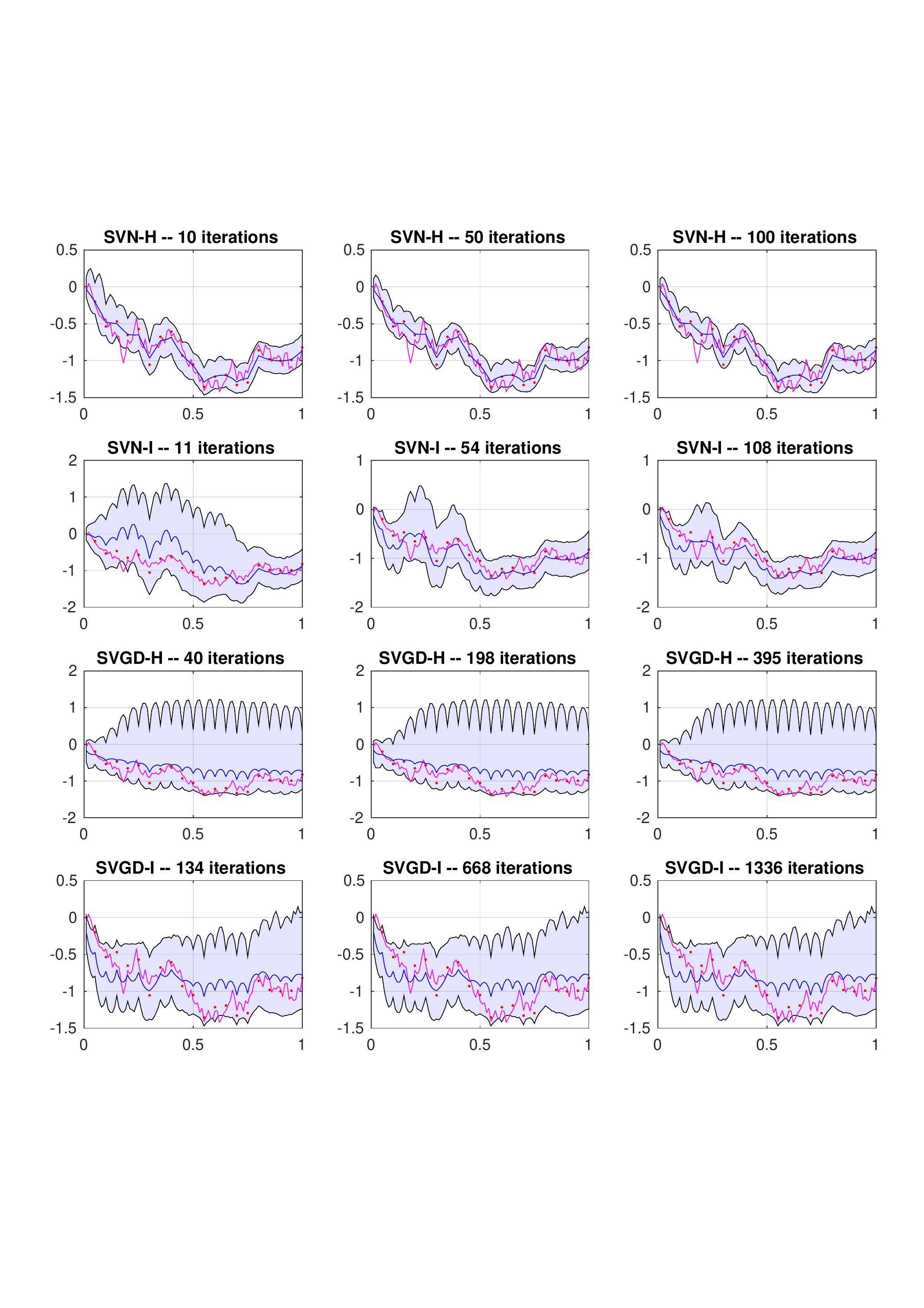}
	\caption{In each plot, the magenta path is the true solution of the discretised Langevin SDE; the blue line is the reconstructed posterior mean; the shaded area is the 90\% marginal posterior credible interval at each time step.}\label{fig:cond_diff}
\end{figure*}
  
The forward operator is defined by $\cF(x) = [u_{t_1}, u_{t_2}, \dots, u_{t_{20}}]^\top\in\R^{20}$, where $t_i$ are equispaced observation times in the interval $(0,1]$, i.e., $t_i = 0.05\,i$. By taking $\sigma=0.1$, we define an observation $y=\cF(\xtrue)+\xi\in\R^{20}$, where $\xtrue$ is a Brownian motion path and $\xi\sim\cN(0,\sigma^2\,I)$. For discretization, we use an Euler-Maruyama scheme with step size $\Delta t = 10^{-2}$; therefore the dimensionality of the problem is $d=100$.
The prior is given by the Brownian motion $x=(x_t)_{t\ge 0}$, described above.

Figure \ref{fig:cond_diff} summarises the outputs of four algorithms, each with $n=1000$ particles initially sampled from $\pi_0$. 
Figure \ref{fig:cond_diff} is presented in the same way as Figure \ref{fig:doublebanana} from the first test case. 
The iteration numbers are scaled, so that we can compare outputs generated by various algorithms using approximately the same amount of CPU time. 
In Figure \ref{fig:cond_diff}, the path in magenta corresponds to the solution of the Langevin SDE in \eqref{eq:langevin} driven by the true Brownian path $\xtrue$. The red points correspond to the 20 noisy observations. The blue path is the reconstruction of the magenta path, i.e., it corresponds to the solution of the Langevin SDE driven by the \textit{posterior mean} of $(x_t)_{t\ge 0}$.  Finally, the shaded area represents the marginal 90\% credible interval of each dimension (i.e., at each time step) of the posterior distribution of $u$.

We observe excellent performance of SVN-H. After 50 iterations, the algorithm has already converged, accurately reconstructing the posterior mean (which in turn captures the trends of the true path) and the posterior credible intervals. 
(See Figure \ref{fig:hmc_comparison} and below for a validation of these results against a reference MCMC simulation.)
SVN-I manages to provide a reasonable reconstruction of the target distribution: the posterior mean shows fair agreement with the true solution, but the credible intervals are slightly overestimated, compared to SVN-H  and the reference MCMC.
The overestimated credible interval may be due to the poor dimension scaling of the isotropic kernel used by SVN-I. 
With the same amount of computational effort, SVGD-H and SVGD-I cannot reconstruct the posterior distribution: both the posterior mean and the posterior credible intervals depart significantly from their true values.

In Figure \ref{fig:hmc_comparison},
we compare the posterior distribution approximated with SVN-H
(using $n=1000$ particles and 100 iterations) to that obtained with 
a reference MCMC run
(using the DILI algorithm of \cite{cui2016dimension} with an effective sample size of $10^5$),
 showing an overall good agreement.
The thick blue and green paths correspond to the posterior means estimated by SVN-H and MCMC,
respectively. The blue and green shaded areas represent the marginal 90\% credible intervals (at each time step) produced by SVN-H and 
MCMC.
In this example, the posterior mean of SVN-H matches that of 
MCMC
quite closely, and both are comparable to the data-generating path
(thick magenta line). (The posterior means are much smoother than the true path, which is to be expected.) The estimated credible intervals of SVN-H and 
MCMC
also match fairly well along the entire path of the SDE.

\begin{figure*}[h!]
  \centering
  \includegraphics[trim= 2cm 6cm 2cm 6cm, 
  width=0.5\textwidth]{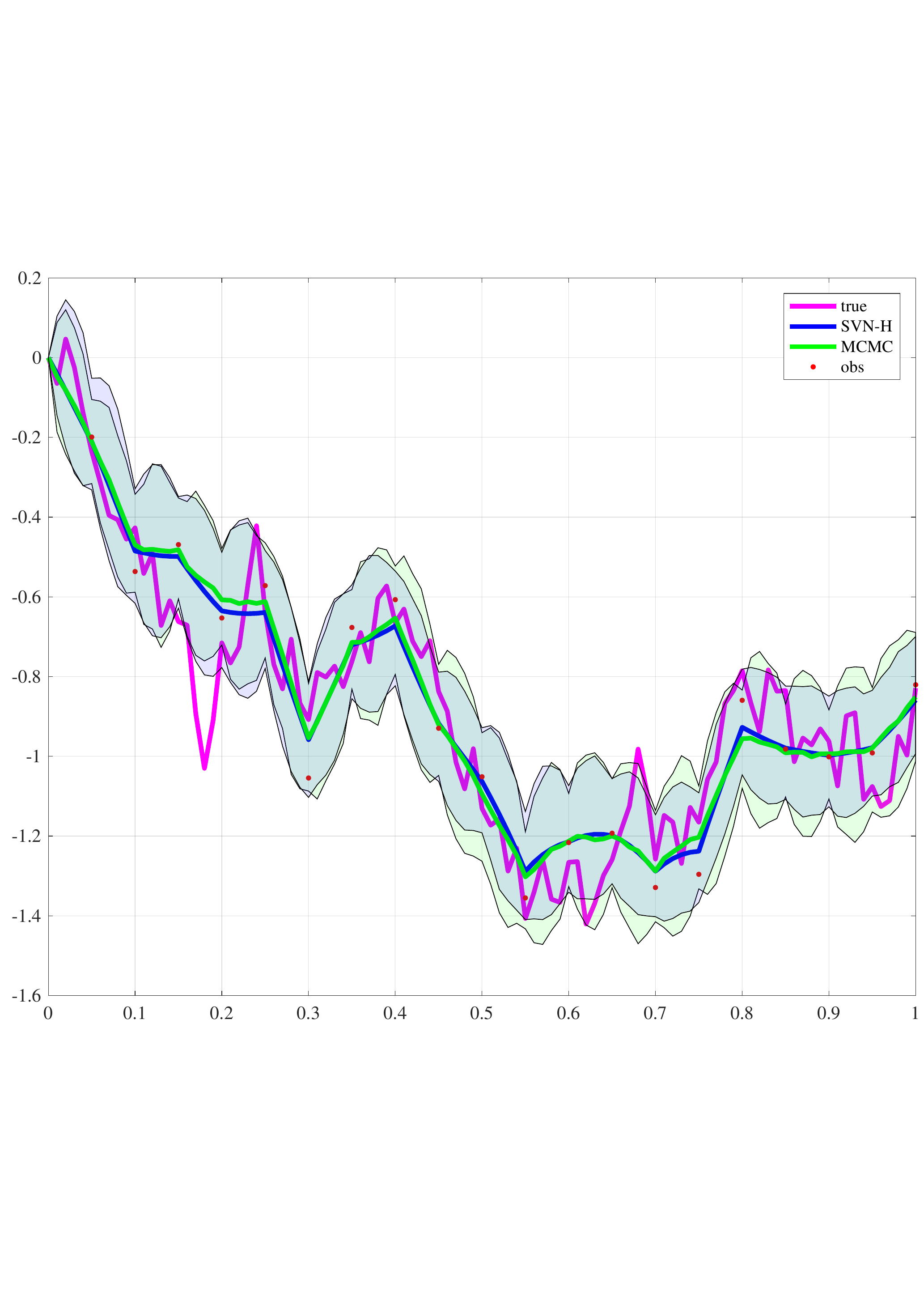} %
  \caption{Comparison of reconstructed distributions from SVN-H and MCMC}\label{fig:hmc_comparison}
\end{figure*}

\section{Discussion}

In general, the use of Gaussian reproducing kernels may be problematic
in high dimensions, due to the locality of the kernel
\cite{francois2005locality}.  While we observe in Section
\ref{sec:kernel} that using a properly rescaled Gaussian kernel can
improve the performance of the SVN method in high dimensions, we also
believe that a truly general purpose nonparametric algorithm using
local kernels will inevitably face further challenges in high-dimensional
settings.  A sensible approach to coping with high dimensionality is also to
design algorithms that can detect and exploit essential {\it
  structure} in the target distribution, whether it be decaying
correlation, conditional independence, low rank, multiple scales, and
so on.  See \cite{spantini2017inference,wang2017structured} for recent
efforts in this direction.

\section{Acknowledgements}
G.\ Detommaso is supported by the EPSRC Centre for Doctoral Training in Statistical Applied Mathematics at Bath (EP/L015684/1) and by a scholarship from the Alan Turing Institute.
T.\ Cui, G.\ Detommaso, A.\ Spantini, and Y.\ Marzouk acknowledge support from the MATRIX Program on ``Computational Inverse Problems'' held at the MATRIX Institute, Australia, where this joint collaboration was initiated.
A.\ Spantini and Y.\ Marzouk also acknowledge support from the AFOSR Computational Mathematics Program.

\appendix

\section{Proof of Theorem 1}\label{apx:proofs}

The following proposition is used to prove Theorem 1.
\begin{proposition}\label{prop:gradient}
	Define the directional derivative of $J_p$ as the first variation of $J_p$ at $S \in \cH^d$ along a direction $V \in \cH^d$, 
	\[
	D J_p[S] (V) := \lim_{\tau \rightarrow 0} \frac{1}{\tau}\big( J_p[S + \tau V] - J_p[S] \big)\,.
	\]
	The first variation takes the form
	\begin{equation}
	D J_p[S] (V) = - \Ev_{x \sim p} \left[  \big( \nabla_x \log\pi(x + S(x))\big)^\top  V(x)  + \textnormal{trace} \big( (I + \nabla_xS(x)) ^{-1}\nabla_x V(x) \big) \right]\,.
	\end{equation}
\end{proposition}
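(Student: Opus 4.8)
The plan is to reduce the KL functional to an explicit expectation over the fixed reference $p$ via a change of variables, and then to differentiate under the expectation sign. Write $T = I + S$ and assume $T$ is a diffeomorphism of $\R^d$, so that $I + \nabla_x S(x)$ is invertible everywhere (this holds for $S \in \cH^d$ of sufficiently small magnitude, and is implicit in the standing differentiability hypotheses on $J_p$). Starting from $J_p[S] = \KL(T_* p \,\|\, \pi) = \int (T_* p)(y)\,\log\big((T_* p)(y)/\pi(y)\big)\,dy$, I would substitute $y = T(x)$ and use the identities $(T_* p)(T(x)) = p(x)\,|\det(I + \nabla_x S(x))|^{-1}$ and $dy = |\det(I + \nabla_x S(x))|\,dx$, which together give $(T_*p)(y)\,dy = p(x)\,dx$. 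This yields
\[
J_p[S] = \Ev_{x \sim p}\big[\log p(x) - \log|\det(I + \nabla_x S(x))| - \log \pi(x + S(x))\big].
\]

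The first term $\Ev_{x \sim p}[\log p(x)]$ is independent of $S$ and therefore drops out of the directional derivative. For the remaining two terms, I would replace $S$ by $S + \tau V$, form the difference quotient, and differentiate at $\tau = 0$. The chain rule handles the log-target term, giving $\partial_\tau \log \pi(x + S(x) + \tau V(x))\big|_{\tau = 0} = (\nabla_x \log \pi(x + S(x)))^\top V(x)$. For the log-determinant term, Jacobi's formula gives $\partial_\tau \log|\det(I + \nabla_x S(x) + \tau \nabla_x V(x))|\big|_{\tau=0} = \trace((I + \nabla_x S(x))^{-1} \nabla_x V(x))$, where the invertibility of $I + \nabla_x S(x)$ lets me drop the absolute value on a neighbourhood of $\tau = 0$ and legitimizes the identity. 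Assembling these two contributions with the correct (negative) signs reproduces exactly the claimed expression for $D J_p[S](V)$.

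The main obstacle is justifying the interchange of the $\tau$-limit with the expectation over $p$, i.e., differentiation under the integral sign. This requires a dominated-convergence argument: one needs integrable bounds, uniform in small $\tau$, on the $\tau$-derivatives of both integrands, which rest on the regularity of $\log \pi$, the RKHS membership of $S$ and $V$ (ensuring that $S$, $V$, and their gradients are controlled), and the uniform invertibility of $I + \nabla_x S(x) + \tau \nabla_x V(x)$ for $|\tau|$ small. I would invoke the smoothness and integrability assumptions implicit in the Fréchet differentiability of $J_p$ asserted in the main text, so that the change of variables, the application of Jacobi's formula, and the exchange of differentiation and expectation are each licensed. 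The remaining manipulations are routine applications of the chain rule and matrix calculus.
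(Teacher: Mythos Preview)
Your proposal is correct and follows essentially the same route as the paper: both reduce $\KL((I+S)_*p\,\|\,\pi)$ to an expectation over the fixed reference $p$ (the paper via the pullback identity $\KL(T_*p\,\|\,\pi)=\KL(p\,\|\,T^*\pi)$, you via an explicit change of variables---these are the same computation), and then differentiate the $\log\pi$ and $\log|\det|$ terms at $\tau=0$ (the paper by Taylor expansion, you by the chain rule and Jacobi's formula). If anything, your treatment is slightly more careful in flagging the dominated-convergence and invertibility hypotheses needed to interchange the limit with the expectation.
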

\begin{proof}
	Given the identity map $I$ and a transport map in the form of $T = I + S + \tau V$, the pullback density of $\pi$  is defined as 
	\[
	T^\ast \pi = \pi( T(x)) \,|\det \nabla_x T(x) | = \pi\big( x + S(x) + \tau V(x)\big)\,\big | \det \big( I + \nabla_x S(x) + \tau \nabla_xT(x) \big) \big|\,. 
	\]
	The perturbed  objective function $J_p[S + \tau V]$ takes the form 
	\begin{align*}
	J_p[S + \tau V] &= \KL((I + S + \tau V)_\ast p\ \|\ \pi) \notag\\ 
	&= \KL( p\ \|\ (I + S + \tau V)^\ast\pi) \notag \\
	&= \int p(x) \log p(x) dx - \int p(x)\Big( \log \pi\big( x + S(x) + \tau V(x)\big) \notag  \\
	&\quad\quad + \log \big |\det\big( I + \nabla_x S(x) + \tau \nabla_x V(x) \big) \big| \Big)\,dx \,. %
	\end{align*}
	Thus we have
	\begin{align}
	J_p[S + \tau V] &- J_p[S] =   - \int p(x)\bigg( \underbrace{\log \pi\big( x + S(x) + \tau V(x)\big) - \log \pi(x + S(x)) }_{(i)} \bigg) \,dx \notag \\ 
	& - \int p(x) \big( \underbrace{ \log \big |\det\big( I + \nabla_x S(x) + \tau \nabla_x V(x) \big) \big| - \log \big |\det\big( I + \nabla_x S(x) \big) \big| }_{(ii)}  \big) \,dx \,. \label{eq:dJdeps}
	\end{align}
	Performing a Taylor expansion of the terms (i) and (ii) in \eqref{eq:dJdeps}, 
	we have
	\begin{align*}
	(i) & = \tau \big( \nabla_x \log\pi(x + S(x)) \big)^\top V(x) + O(\tau^2)\, ,\\
	(ii) & = \tau\, \trace \big( (I + \nabla_x S(x) )^{-1}\nabla_x V(x) \big) + O(\tau^2)\, ,
	\end{align*}
	where $\nabla_x \log\pi(x + S(x))$ is the partial derivative of $\log \pi$ evaluated at $x + S(x)$. 
	Plugging the above expression into \eqref{eq:dJdeps} and the definition of the directional derivative, we obtain
	\begin{equation}
	D J_p[S] (V) = - \Ev_{x \sim p} \left[  \big( \nabla_x \log\pi(x + S(x))\big)^\top  V(x)  + \trace \big( \nabla_x (x + \nabla_x S(x) )^{-1}\nabla_x V(x) \big) \right].
	\end{equation}
\end{proof}

The Fr\'{e}chet derivative of $J_p$ evaluated at  $S \in \cH^d$, $\nabla J_p[S]: \cH^d \to \cL(\cH^d, \R)$ satisfies 
\[
D J_p[S] (V) = \langle \nabla J_p[S]\,, V \rangle_{\cH^d}, \quad \forall \, V \in \cH^d\, ,
\]
and thus we can use Proposition \ref{prop:gradient} to prove Theorem 1.

\begin{proof}[Proof of Theorem 1]
	The second variation of $J_p$ at $\bs 0$ along directions $V, W \in \cH^d$ takes the form
	\[
	D^2 J_p[\bs 0] (V, W) := \lim_{\tau \rightarrow 0} \frac{1}{\tau}\big( DJ_p[\tau W](V) - DJ_p[\bs 0](V) \big)\,.
	\]
	Following Proposition 3, we have 
	\begin{align}
	D^2 J_p[\bs 0] (V, W) = & \lim_{\tau \rightarrow 0} \frac{1}{\tau} \big( D J_p[\tau W] (V) - D J_p[\bs 0] (V) \big ) \notag\\
	= &  - \Ev_{x \sim p} \big[  \underbrace{ \lim_{\tau \rightarrow 0}  \frac{1}{\tau}  \big( \nabla_x \log\pi(x + \tau W(x)) - \nabla_x \log\pi(x) }_{(i)} \big)^\top  V(x) \big] \notag \\
	& - \Ev_{x \sim p} \big[ \trace \big( \underbrace{ \lim_{\tau \rightarrow 0} \frac{1}{\tau}  [ (I + \tau \nabla_x W(x) )^{-1} - I ]}_{(ii)} \nabla_x V(x) \big) \big]\,.
	\end{align}
	
	By Taylor expansion, the %
	limits (i) and (ii) of the above equation can be written as %
	\begin{align*}
	(i) & = \nabla_x^2 \log\pi(x) W(x)\, ,\\
	(ii) & = - \nabla_x W(x)\, .
	\end{align*}
	Thus, the second variation of $J_p$ at $\bs 0$ along directions $V, W \in \cH^d$ becomes 
	\begin{equation}\label{eq:der2}
	D^2 J_p[\bs 0] (V, W) = - \Ev_{x \sim p} \big[ W(x)^\top \nabla_x^2 \log\pi(x) V(x) - \trace \big( \nabla_x W(x) \nabla_x V(x) \big) \big]\,.
	\end{equation}
	Using the reproducing property of $V,W \in \cH^d$, i.e.
	\begin{align*}
	&v_i(x) = \langle k(x, \cdot), v_i(\cdot) \rangle_{\cH}\, , &&w_j(x) = \langle k(x, \cdot), w_j(\cdot) \rangle_{\cH}\\
	&\nabla_x v_i(x) = \langle \nabla_x k(x, \cdot), v_i(\cdot) \rangle_{\cH^d}\,, &&\nabla_x w_i(x) = \langle \nabla_x k(x, \cdot), w_i(\cdot) \rangle_{\cH^d}
	\end{align*}
	we then have
	\[
	\Ev_{x \sim p} \big[ W(x)^\top \nabla_x^2 \log\pi(x) V(x) \big] = \sum_{i=1}^d\sum_{j=1}^d \Big\langle\langle\Ev_{x \sim p} \big[\de_{ij}^2\log\pi(x) k(x,y) k(x,z)\big], w_j(z)\rangle_{\cH}, v_i(y)\Big\rangle_{\cH}
	\]
	and
	\[
	\Ev_{x \sim p} \big[\trace \big( \nabla_x W(x) \nabla_x V(x) \big) \big] = \sum_{i=1}^d\sum_{j=1}^d \Big\langle\langle\Ev_{x \sim p} \big[\de_i k(x,y)\de_j k(x,z)\big], w_j(z)\rangle_{\cH}, v_i(y)\Big\rangle_{\cH}\,.
	\]
	Plugging the above identities into \eqref{eq:der2}, the second variation can be expressed as
	\[
	D^2 J_p[\bs 0] (V, W) = \sum_{i=1}^d\sum_{j=1}^d \Big\langle\langle h_{ij}(y,z), w_j(z)\rangle_{\cH}, v_i(y)\Big\rangle_{\cH}\,,
	\]
	where 
	\[
	h_{ij}(y,z) := \Ev_{x \sim p} \big[-\de_{ij}^2\log\pi(x) k(x,y) k(x,z)+\de_i k(x,y)\de_j k(x,z)\big]\,.
	\]	
	Hence the result.
\end{proof}

\section{Proof of Corollary 1}

\begin{proof}
	Here we drop the subscript $p_l$. The ensemble of particles $(x_k)_{k=1}^n$ defines a linear function space  $\cH_n = {\rm span}\{ k(x_1, \cdot), \ldots, k(x_n, \cdot) \}$.
	In the Galerkin approach, we seek a solution $W = (w_1,\dots,w_d)^\top \in \cH^d_n$ such that the residual of the Newton direction
	\begin{equation}\label{eq:newton}
	\sum_{i=1}^d\left\langle
	\sum_{j=1}^d
	\left\langle  h_{ij}(y,z), w_j(z)\right\rangle_{\cH} + \partial_i J[\bs 0](y), v_i(y)
	\right\rangle_{\cH} = 0, 
	\end{equation}
	is zero for all possible $V \in \cH^d_n$.
	This way, we can approximate each component $w_j$ of the function $W$ as
	\begin{equation} \label{eq:wi}
	w_j(z) = \sum_{k=1}^n \alpha^k_j \,k(x_k, z),
	\end{equation}
	for a collection of unknown coefficients $(\alpha^k_j)$.
	We define $V^s = (v_1^s,\dots,v_d^s)^\top$ to be the test function where $v^s_i(y) = k(x_s, y)$ for all $s = 1, \ldots, n$.

	We first project the Newton direction \eqref{eq:newton} onto $V^s$ for all $s = 1, \ldots, n$. Applying the reproducing property of the kernel, this leads to 
	\begin{equation}  \label{eq:semi-discr}
	\sum_{j=1}^d
	\left\langle  h_{ij}(x_s, z), w_j(z)\right\rangle_{\cH^d} + \partial_i J_{p_l}[\bs 0](x_s) = 0,
	\qquad i=1,\ldots,d,\quad s=1,\ldots,n.
	\end{equation}
	Plugging \eqref{eq:wi}  into \eqref{eq:semi-discr}, we obtain the fully discrete set of equations
	\begin{equation}  \label{eq:discr}
	\sum_{j=1}^d\, \sum_{\ell=1}^n \,
	h_{ij}(x_s,x_k)\,\alpha^k_j  + \partial_i J_{p_l}[\bs 0](x_s) = 0,
	\quad i=1,\ldots,d,\; s=1,\ldots,n, \; k=1,\ldots,n.
	\end{equation}
	We denote the coefficient vector $\alpha^k := \big(\alpha^k_1, \ldots, \alpha^k_d\big)^\top$ for each $x_k$, the block Hessian matrix $(H^{s,k} )_{ij} := h_{ij}(x_s,x_k)$ for each pair of $x_s$ and $x_k$, and $\nabla J^s := \nabla J[\bs 0](x_s)$ for each $x_s$. 
	Then equation \eqref{eq:discr} can be expressed as
	\begin{equation}  \label{eq:semi-comp}
	\sum_{k=1}^n \,  
	H^{s,k}\,\alpha^k   = \nabla J^s,
	\qquad s=1,\ldots,n.
	\end{equation}
\end{proof}

\section{Additional test cases}\label{apx:testcases}

\subsection{Comparison between the full and inexact Newton methods}

Here we compare three different Stein variational Newton methods: \texttt{SVNfull} denotes the method that solves the fully coupled Newton system in equation (16) of the main paper, with no approximations; \texttt{SVNCG} denotes the method that applies inexact Newton--CG to the fully coupled system (16); and \texttt{SVNbd} employs the block-diagonal approximation given in equation (17) of the main paper.

We first make comparisons using the two-dimensional double banana distribution presented in Section 5.1. 
We run our test case for $N=100$ particles and 20 iterations. Figure \ref{fig:compnewtonsdb} shows the contours of the target density and the samples produced by each of the three algorithms. Compared to the full Newton method, both the block-diagonal approximation and the inexact Newton--CG generate results of similar quality.

\begin{figure*}[!h]
	\centering
	\includegraphics[trim= 3cm 10cm 3cm 9cm, 
	width=0.8\textwidth]{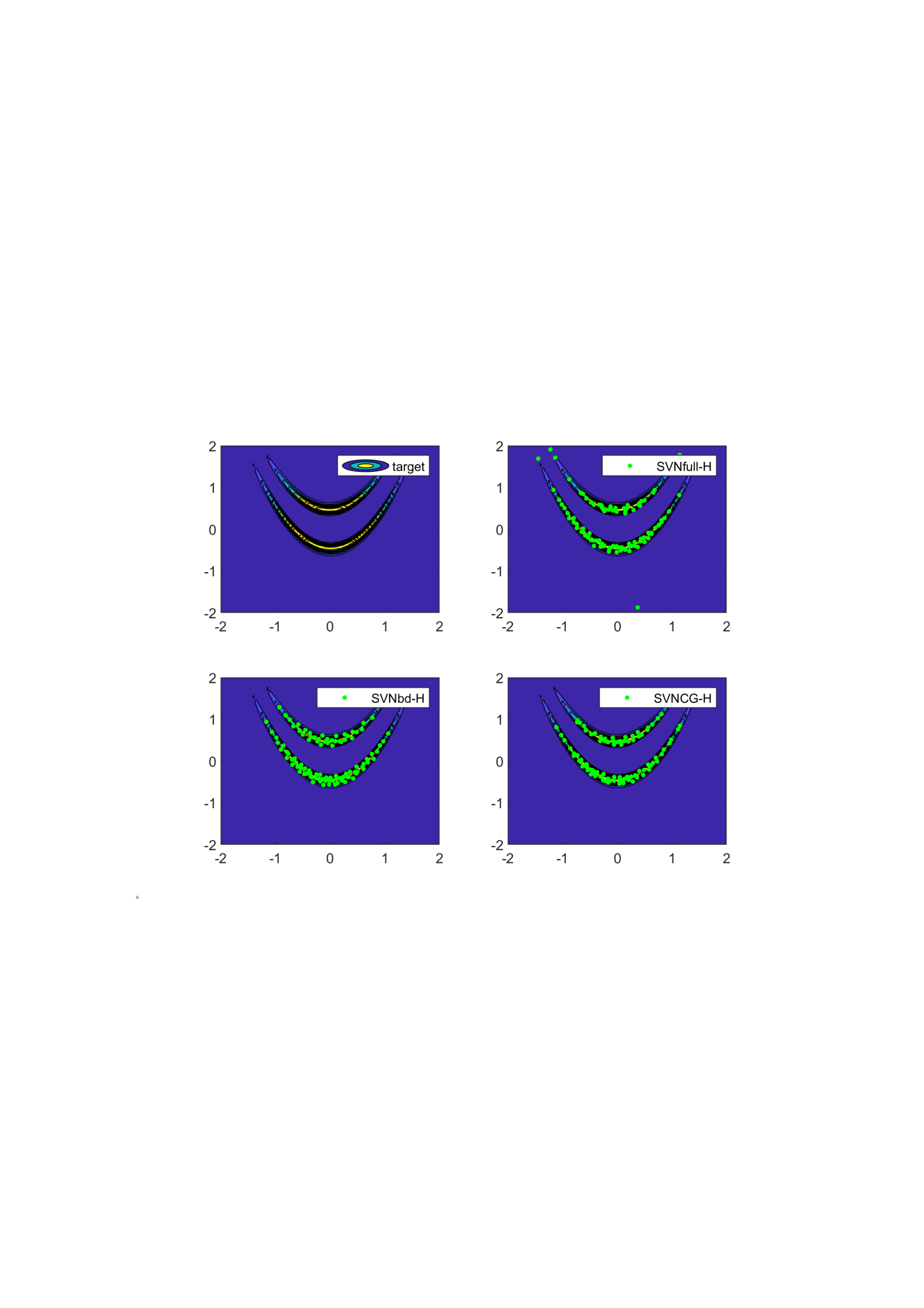} %
	\caption{Double-banana example: performance comparison between \texttt{SVNfull}, \texttt{SVNCG}, and \texttt{SVNbd} after 20 iterations}\label{fig:compnewtonsdb}
\end{figure*}


We use an additional nonlinear regression test case for further comparisons. In this case, the forward operator is given by
\[ \mathcal{F}(x) = c_1 x_1^3 + c_2 x_2\,, \]
where  $x = [x_1,x_2]^\top$ and $c_1,c_2$ are some fixed coefficients sampled independently from a standard normal distribution. A data point is then given by $y=\mathcal{F}(x)+\varepsilon$, where $\varepsilon\sim N(0,\sigma^2)$ and $\sigma = 0.3$. We use a standard normal prior distribution on $x$.

\begin{figure*}[h!]
	\centering
	\includegraphics[trim= 3cm 10cm 3cm 9cm, 
	width=0.8\textwidth]{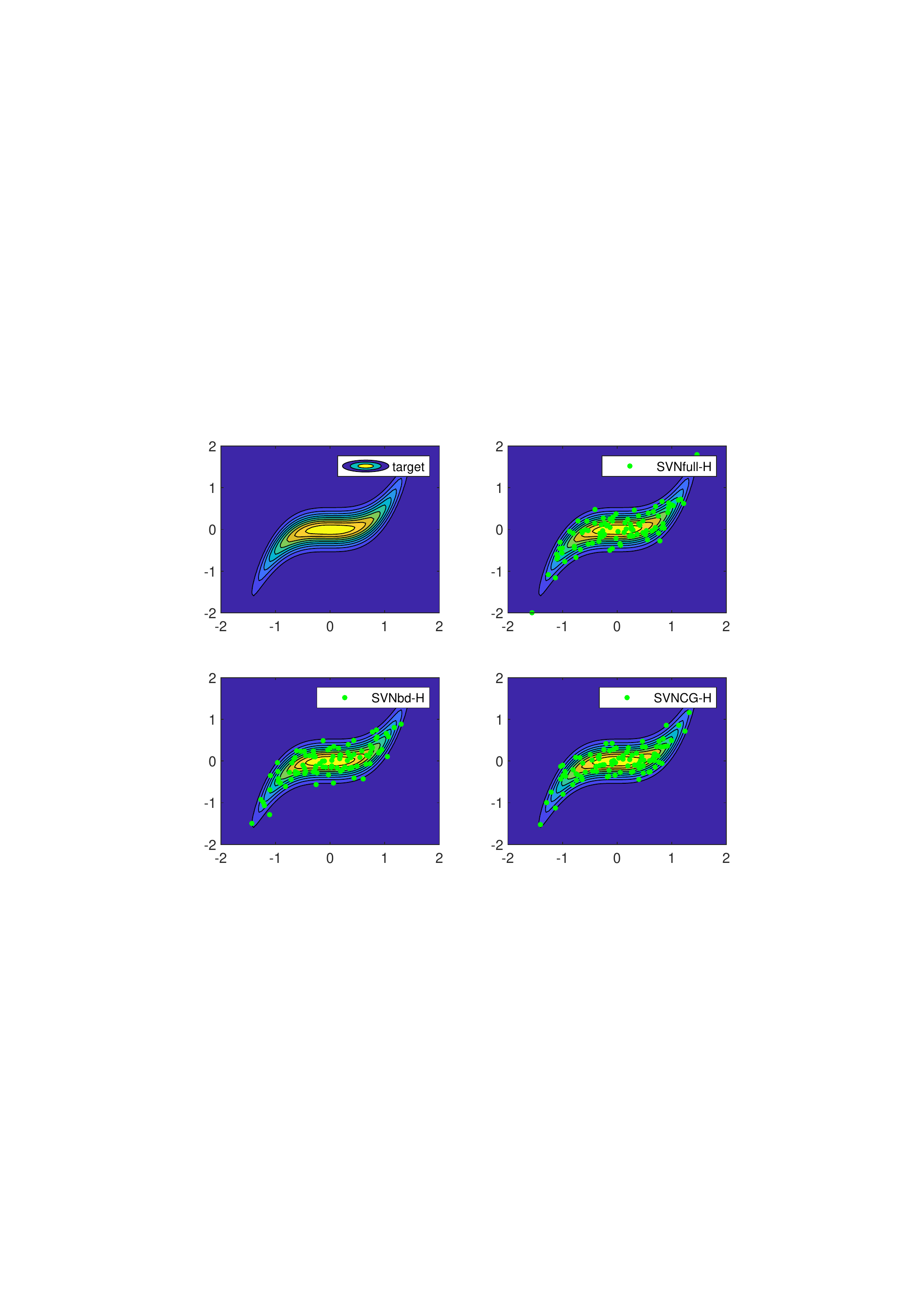} %
	\caption{Nonlinear regression example: performance comparison between \texttt{SVNfull}, \texttt{SVNCG}, and \texttt{SVNbd} after 20 iterations}\label{fig:nonlinregr}
\end{figure*}

We run our test case for $N=100$ particles and 20 iterations. Figure \ref{fig:nonlinregr} shows contours of the posterior density and the samples produced by each of the three algorithms.
Again, both the block-diagonal approximation and the inexact Newton--CG generate results of similar quality to those of the full Newton method.

These numerical results suggest that the block-diagonal approximation and the inexact Newton--CG can be effective methods for iteratively constructing the transport maps in SVN. We will adopt these approximate SVN strategies on large-scale problems, where computing the full Newton direction is not feasible.

\subsection{Bayesian neural network}

In this test case, we set up a Bayesian neural network as described in \cite{liu2016svgd}. We use the open-source ``yacht hydrodynamics'' data set\footnote{\url{http://archive.ics.uci.edu/ml/datasets/yacht+hydrodynamics}} and denote the data by $\mathcal{D}=(x_i,y_i)_{i=1}^{M}$, where $x_i$ is an input, $y_i$ is the corresponding scalar prediction, and $M=308$. We divide the data into a training set of $m=247$ input--prediction pairs and a validation set of $M-m = 61$ additional pairs. 
For each input, we model the corresponding prediction as
\[ y_i = f(x_i,w) + \varepsilon_i\, , \]  
where $f$ denotes the neural network with weight vector $w \in \mathbb{R}^d$ and $\varepsilon_i\sim N(0,\gamma^{-1})$ is an additive Gaussian error. The dimension of the weight vector is $d = 2951$.  We endow the weights $w$ with independent Gaussian priors, $w \sim N(0,\lambda^{-1} I)$. The inference problem then follows from the likelihood function,
\[ \cL( \mathcal{D} | w, \gamma) = \left(\frac{\gamma}{2\pi}\right)^{\frac{m}{2}}\exp\left(-\frac{\gamma}{2}\sum_{i=1}^m(f(x,w) - y_i)^2\right), \]
and the prior density,
\[ \pi_0(w|\lambda) = \left(\frac{\lambda}{2\pi}\right)^{\frac{m}{2}}\exp\left(-\frac{\gamma}{2}\sum_{i=1}^m w_j^2\right), \, \]
where $\gamma$ and $\lambda$ play the role of hyperparameters. 

\paragraph{Performance comparison of SVN-H with SVGD-I.} 

We compare SVN-H with the original SVGD-I algorithm on this Bayesian neural network example, with hyperparameters fixed to $\log \lambda  = -10$ (which provides a very uninformative prior distribution) and $\log \gamma = 0$. First, we run a line-search with Newton--CG to find the posterior mode $w^\ast$. Figure \ref{fig:mapprediction} shows that neural network predictions at the posterior mode almost perfectly match the validation data. 
\begin{figure*}[h!]
	\centering
	\includegraphics[trim= 2cm 7cm 2cm 6cm, 
	width=0.8\textwidth]{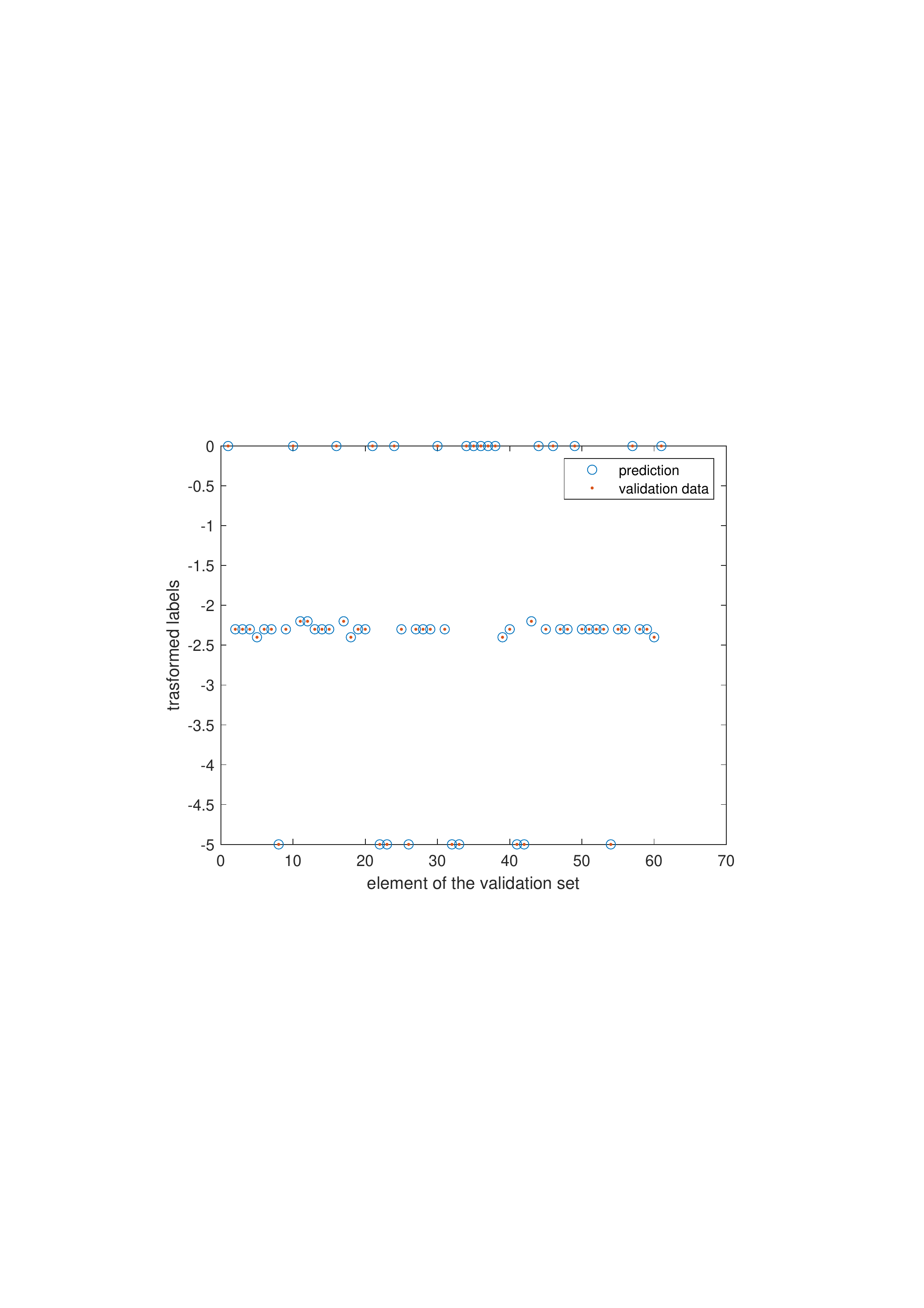} %
	\caption{Neural network prediction at the posterior mode very closely matches the validation data.}\label{fig:mapprediction}
\end{figure*}
Then, we randomly initialise $n=30$ particles $(x_i)_{i=1}^n$ around the mode, i.e., by independently drawing $x_i\sim \cN(w^\ast, I)$. As in the previous test cases, we make a fair comparison of SVN-H and SVGD-I by taking 10, 20, and 30 iterations of SVN-H and rescaling the number of iterations of SVGD-I to match the computational costs of the two algorithms. Because this test case is very high-dimensional, rather than storing the entire Hessian matrix and solving the Newton system we use the inexact Newton--CG approach within SVN, which requires only matrix-vector products and yields enormous memory savings. Implementation details can be found in our GitHub repository.

\begin{figure*}[!h]
	\centering
	\includegraphics[trim= 3cm 7cm 2cm 8cm, 
	width=0.8\textwidth]{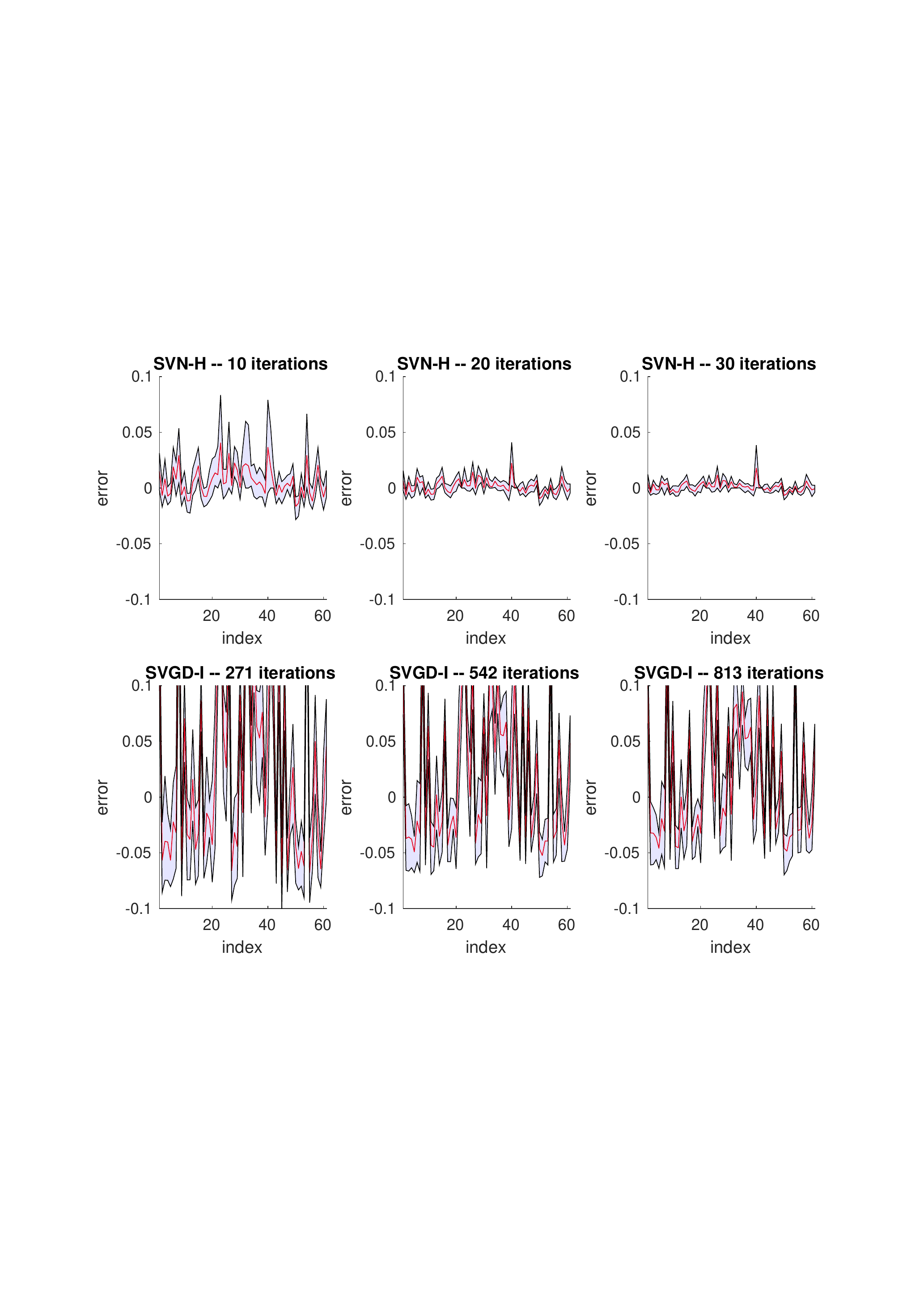} %
	\caption{Bayesian neural network example: Comparison between SVN-H and SVGD-I, showing the distribution of errors between the validation data and samples from the posterior predictive. }\label{fig:bnncomparison}
\end{figure*}

Figure \ref{fig:bnncomparison} shows distributions of the error on the validation set, as resulting from posterior predictions. To obtain these errors, we use the particle representation of the posterior on the weights $w$ to evaluate posterior predictions on the validation inputs $(x_i)_{i=m+1}^M$. Then we evaluate the error of each of these predictions. The red line represents the mean of these errors at each validation input $x_i$, and the shaded region represents the 90\% credible interval of these error distribution. Although both algorithms ``work'' in the sense of producing errors of small range overall, SVN-H yields distributions of prediction error with smaller means and considerably reduced variances, compared to SVGD-I.

\subsection{Scalability of kernels in high dimensions}\label{subsec:kernelperformance}

\paragraph{Discretization-invariant posterior distribution.}
Here we illustrate the dimension scalability of the scaled Hessian kernel, compared to the isotropic kernel used in \cite{liu2016svgd}.
We consider a linear Bayesian inverse problem in a function space setting \cite{stuart2010}: 
the forward operator is a linear functional $\cF(x) = \langle \sin(\pi s), x(s) \rangle$, where the function $x$ is defined for $s \in [0,1]$. The scalar observation $y = \cF(x) + \xi$, where $\xi$ is Gaussian with zero mean and standard deviation $\sigma = 0.3$. The prior is a Gaussian measure $\cN(0,\cK^{-1})$ where $\cK$ is the Laplace operator $-x''(s), \ s \in [0, 1]$, with zero essential boundary conditions. 

Discretising this problem with finite differences on a uniform grid with $d$ degrees of freedom, we obtain a Gaussian prior density $\pi_0(x)$ with zero mean and covariance matrix $K^{-1}$, where $K$ is the finite difference approximation of the Laplacian. 
Let the vector $a$ denote the discretised function $\sin(\pi s), s \in [0, 1]$, and let the corresponding discretised parameter be denoted by $x$ (overloading notation for convenience). Then the finite-dimensional forward operator can be written as $\cF(x) = a^\top x$.
After discretization, the posterior has a Gaussian density of the form $\pi=\cN(\mpos,\Cpos)$, where 
\[ 
\mpos = \frac{y}{\sigma^2}\Cpos\, a\,,\quad\quad\quad \Cpos = \big(K^{-1}+\frac{1}{\sigma^2} a a^\top \big)^{-1}\,. 
\]

To benchmark the performance of various kernels, we construct certain summaries of the posterior distribution. In particular, we use our SVN methods with the scaled Hessian kernel (SVN-H) and the isotropic kernel (SVN-I) to estimate the component-wise average of the posterior mean, $\tfrac{1}{d}\sum_{i=1}^d m_{\textnormal{pos}, i}$, and the trace of the posterior covariance, $\trace(\Cpos)$, for problems discretised at different resolutions $d \in \{40, 60, 80, 100\}$. 
We run each experiment with $n=1000$ particles and $50$ iterations of SVN. 
We compare the numerical estimates of these quantities to the analytically known results. 
These comparisons are summarised in Tables \ref{table_Q1} and \ref{table_Q2}.
From Table \ref{table_Q1}, we can observe that all algorithms almost perfectly recover the average of the posterior mean up to the first three significant figures. 
However, Table \ref{table_Q2} shows that SVN-H does a good job in estimating the trace of the posterior covariance consistently for all dimensions, whereas SVN-I under-estimates the trace---suggesting that particles are under-dispersed and not correctly capturing the uncertainty in the parameter $x$.
This example suggests that the scaled Hessian kernel can lead to a more accurate posterior reconstruction for high-dimensional distributions than the isotropic kernel.

\begin{table}[h!]
	\caption{Comparison of theoretical and estimated averages of the posterior mean}
	\label{table_Q1}
	\centering
	\begin{tabular}{rcccc}
		\toprule
		\multicolumn{5}{c}{Averages of the posterior mean $\tfrac{1}{d}\,\sum_{i=1}^d m_{\textnormal{pos}, i}$}                   \\
		\cmidrule(r){1-5}
		$d$         & 40    & 60    & 80    & 100  \\
		\midrule
		Theoretical &   0.4658  &  0.4634  &  0.4622  &  0.4615   \\
		SVN-H  &  0.4658  &  0.4634  &  0.4623  &  0.4614   \\
		SVN-I & 0.4657  &  0.4633  &  0.4622  &  0.4615   \\
		\bottomrule
	\end{tabular}
\end{table}

\begin{table}[h!]
	\caption{Comparison of theoretical and estimated traces of the posterior covariance}
	\label{table_Q2}
	\centering
	\begin{tabular}{rcccc}
		\toprule
		\multicolumn{5}{c}{Traces of the posterior covariance $\trace(\Cpos)$}    \\
		\cmidrule(r){1-5}
		$d$         & 40    & 60    & 80    & 100  \\
		\midrule
		Theoretical &  0.1295  &  0.1297  &  0.1299  &  0.1299   \\
		SVN-H  & 0.1271  &  0.1281  &  0.1304  &  0.1293    \\
		SVN-I  & 0.0925  &  0.0925  &  0.0925  &  0.0923   \\
		\bottomrule
	\end{tabular}
\end{table}

\paragraph{A posterior distribution that is not discretization invariant.}  Now we examine the dimension-scalability of various kernels in a problem that does not have a well-defined limit with increasing parameter dimension.
We modify the linear Bayesian inverse problem introduced above: now the prior covariance is the identity matrix, i.e., $K^{-1} = I$ and the vector $a$ used to define the forward operator is drawn from a uniform distribution, $a_i \sim \mathcal{U}(2,10), \ i = 1, \ldots, d$. This way, the posterior is not discretization invariant. 
We perform the same set of numerical experiments as above and summarise the results in Tables \ref{table_Q3} and \ref{table_Q4}.
Although the target distribution used in this case is not discretization invariant, 
the scaled Hessian kernel is still reasonably effective in reconstructing the target distributions of increasing dimension (according to the summary statistics below), whereas the isotropic kernel under-estimates the target variances for all values of dimension $d$ that we have tested.

\begin{table}[h!]
	\caption{Comparison of theoretical and estimated averages of the posterior mean}
	\label{table_Q3}
	\centering
	\begin{tabular}{rccccc}
		\toprule
		\multicolumn{5}{c}{Averages of the posterior mean $\tfrac{1}{d}\,\sum_{i=1}^d m_{\textnormal{pos}, i}$}   \\
		\cmidrule(r){1-5}
		$d$         & 40    & 60    & 80    & 100  \\
		\midrule
		Theoretical &   0.0037  &  0.0025  &  0.0019  &  0.0015   \\
		SVN-H  & 0.0037  &  0.0025  &  0.0019  &  0.0015   \\
		SVN-I  & 0.0037  &  0.0025  &  0.0019  &  0.0015   \\
		\bottomrule
	\end{tabular}
\end{table}
\begin{table}[!h]
	\caption{Comparison of theoretical and estimated traces of the posterior covariance}
	\label{table_Q4}
	\centering
	\begin{tabular}{rccccc}
		\toprule
		\multicolumn{5}{c}{Traces of the posterior covariance $\trace(\Cpos)$}  \\
		\cmidrule(r){1-5}
		$d$         & 40    & 60    & 80    & 100  \\
		\midrule
		Theoretical & 39.0001  & 59.0000 &  79.0000 &  99.0000   \\
		SVN-H  & 37.7331 &  55.8354 &  73.6383 &  90.7689    \\
		SVN-I  & 8.7133  &  8.2588  &  7.9862  &  7.6876   \\
		\bottomrule
	\end{tabular}
\end{table}

%
%
%
%
%
%
%

%
%
%
%
%
%
%
%

%
%
%
%
%
%
%
%

%
%
%
%
%
%
%
%
%
%
%
%
%
%
%
%
%
%
%
%
%
%
%
%
%
%
%
%
%
%

%
%
%
%
%

%
%
%

%
%
%

%


\bibliographystyle{abbrv}

\end{document}